\let\oldbibliography\thebibliography
\renewcommand{\thebibliography}[1]{\oldbibliography{#1}
\setlength{\itemsep}{0pt}}
\pgfplotsset{width=3.25cm, height=2.4cm, compat=1.5}
\pgfplotsset{every axis/.append style={
                    label style={font=\tiny},
                    tick label style={font=\tiny}  
                    }
}
\DeclareMathOperator{\Rd}{\mathbb{R}}
\DeclareMathOperator{\E}{\mathbb{E}}
\DeclareMathOperator{\Pp}{\mathbb{P}}
\DeclareMathOperator{\F}{\mathcal{F}}
\DeclareMathOperator{\X}{\mathcal{X}}
\DeclareMathOperator{\D}{\mathcal{D}}
\DeclareMathOperator{\SL}{\mathcal{S}}
\DeclareMathOperator{\Ll}{\mathcal{L}}
\DeclareMathOperator{\Dd}{\mathbf{D}}
\DeclareMathOperator{\R}{\mathbf{R}}
\DeclareMathOperator{\GF}{\mathbf{G}_{F}}
\DeclareMathOperator{\DF}{\mathbf{D}_{F}}
\DeclareMathOperator{\GA}{\mathbf{G}_{A}}
\DeclareMathOperator{\DA}{\mathbf{D}_{A}}
\DeclareMathOperator{\GB}{\mathbf{G}_{B}}
\DeclareMathOperator{\DB}{\mathbf{D}_{B}}
\DeclareMathOperator{\XA}{\mathbf{X}_{A}}
\DeclareMathOperator{\NA}{\mathbf{N}_{A}}
\DeclareMathOperator{\XB}{\mathbf{X}_{B}}
\newtheorem{theorem}{Theorem}[section]
\begin{document}
%
% paper title
% can use linebreaks \\ within to get better formatting as desired
\title{Learning to Fuse Music Genres with Generative Adversarial Dual Learning}

% author names and affiliations
% use a multiple column layout for up to two different
% affiliations

%\author{\IEEEauthorblockN{Authors Name/s per 1st Affiliation (Author)}
%\IEEEauthorblockA{line 1 (of Affiliation): dept. name of organization\\
%line 2: name of organization, acronyms acceptable\\
%line 3: City, Country\\
%line 4: Email: name@xyz.com}
%\and
%\IEEEauthorblockN{Authors Name/s per 2nd Affiliation (Author)}
%\IEEEauthorblockA{line 1 (of Affiliation): dept. name of organization\\
%line 2: name of organization, acronyms acceptable\\
%line 3: City, Country\\
%line 4: Email: name@xyz.com}
%}

% conference papers do not typically use \thanks and this command
% is locked out in conference mode. If really needed, such as for
% the acknowledgment of grants, issue a \IEEEoverridecommandlockouts
% after \documentclass

% for over three affiliations, or if they all won't fit within the width
% of the page, use this alternative format:
% 
\author{\IEEEauthorblockN{
Zhiqian Chen\IEEEauthorrefmark{1},
Chih-Wei Wu\IEEEauthorrefmark{2}, 
Yen-Cheng Lu\IEEEauthorrefmark{1},
Alexander Lerch\IEEEauthorrefmark{2} and
Chang-Tien Lu\IEEEauthorrefmark{1}}

\IEEEauthorblockA{\IEEEauthorrefmark{1}Computer Science Department, Virginia Tech\\Email:\{czq,kevinlu,ctlu\}@vt.edu}\IEEEauthorblockA{\IEEEauthorrefmark{2}Center for Music Technology, Georgia Institute of Technology\\Email:\{cwu307, alexander.lerch\}@gatech.edu}
}

% use for special paper notices
%\IEEEspecialpapernotice{(Invited Paper)}

% make the title area
\maketitle

\begin{abstract}
%Automatic music composition is an interesting but challenging problem due to the complicated patterns inside the music. One important way to create new music is to fuse different genres such as jazz and folk. However, existing works suffer from (1)insufficient studies of automatic music fusion, (2)relying on music knowledge, and (3)ignorance of a meaningful indicator showing the progress of fusion learning. 
FusionGAN is a novel genre fusion framework for music generation that integrates the strengths of generative adversarial networks and dual learning. In particular, the proposed method offers a dual learning extension that can effectively integrate the styles of the given domains. To efficiently quantify the difference among diverse domains and avoid the vanishing gradient issue, FusionGAN provides a Wasserstein based metric to approximate the distance between the target domain and the existing domains. Adopting the Wasserstein distance, a new domain is created by combining the patterns of the existing domains using adversarial learning. Experimental results on public music datasets demonstrated that our approach could effectively merge two genres.
\end{abstract}

%
%\begin{IEEEkeywords}
%component; formatting; style; styling;
%
%\end{IEEEkeywords}

% For peer review papers, you can put extra information on the cover
% page as needed:
% \ifCLASSOPTIONpeerreview
% \begin{center} \bfseries EDICS Category: 3-BBND \end{center}
% \fi
%
% For peerreview papers, this IEEEtran command inserts a page break and
% creates the second title. It will be ignored for other modes.
\IEEEpeerreviewmaketitle

\section{Introduction}
Computational creativity is a lively research area that focuses on understanding and facilitating human creativity through the implementation of software programs \cite{colton2009computational}. With the rapid advances in data-driven algorithms such as deep learning \cite{hinton2006fast}, the exploration into computational creativity via machine learning approaches has become increasingly popular.  Examples showing the potential of deep learning for creative approaches are the artistic style transfer on images \cite{Gatys2015} and videos \cite{ruder2016artistic}. Music, with its complex hierarchical and sequential structure and its inherent emotional and aesthetic subjectivity, is an intriguing research subject at the core of human creativity. While there has been some work on generative models for music, research that investigates the capabilities of deep learning for creative applications such as style transfer is limited. This paper aims to fill this space by exploring the idea of style fusion in music with generative adversarial dual learning.
% }

%Different from handcrafted feature engineering or domain knowledge based learning, unsupervised generative models are promising because of their adaptivity. 

In the field of unsupervised generative learning, generative adversarial networks (GAN) \cite{goodfellow2014generative} have recently gained considerable attention.
% , and it is considered as a promising unsupervised learning schema
%\cite{radford2015unsupervised,denton2015deep,larsen2015autoencoding,im2016generating}. 
It is important to note, however, that GANs are designed to learn from a single domain and cannot discover cross-domain knowledge. 
%To handle this, a handful of existing works learn the cross-domain relationship, such as label-to-image\cite{yan2016attribute2image,reed2016generative,mirza2014conditional,DBLP:journals/corr/ChenDHSSA16} and image-to-image\cite{liu2016coupled,isola2016image}. 
Inspired by dual learning in machine translation, several recent publications propose methods to pair unlabeled data points in different domains by optimizing bi-directional reconstruction errors \cite{yi2017dualgan,kim2017learning,zhu2017unpaired}. Although these methods are designed to address the challenge of multimodal learning, they do not address the problem of domain fusion.  

%Currently, existing studies in music fusion suffer from several shortcomings: \textbf{(1) Lack of automatic fusion algorithms:} Nowadays, music fusion still heavily relies on human musicians, leaving fusion learning open and unsolved. \textbf{(2) Difficulty in applying unsupervised generative approach without domain knowledge:} Without music knowledge, most existing models lack an effective mechanism to characterize the abstract concepts of genres. Therefore, unsupervised fusion is an important and challenging issue. \textbf{(3) Ignorance of a meaningful indicator showing the fusion progress:} a recent work\cite{arjovsky2017wasserstein} modifies the GAN so that the loss value can represent training progress on unsupervised generation. However, the sequence fusing problem involves multiple domains, which makes it difficult to maintain synchronization during training.

% {\color{red}
The goal of this paper is to provide a solution for fusing two or more groups of sequential patterns using unsupervised methods. Specifically, we focus on the problem of generating music with a fused music genre.
% }
To combine genres using generative adversarial learning, we propose a framework for integrating multiple GANs. Unlike previous work, our study targets the creation of an unknown domain by mixing two given domains; a multi-way GAN-based model is proposed to absorb existing patterns and yield a new mixture. %Our framework decomposes the training procedure into multiple dual learning. In each iteration, the generator and discriminator of one domain are updated by the feedbacks from the other domains. 
With the utilization of the Wasserstein measure \cite{arjovsky2017wasserstein}, we pose a distance constraint on the new domain that automatically balances its relation to the given domains. %Without keeping a balance manually, 
As a result, our model is capable of generating a mixed pattern after convergence. The main contributions of this paper are:
\begin{itemize}
    \item \textbf{A novel framework for unsupervised music fusion}: The dual learning scheme is extended to involve multiple domains by leveraging mutual regularization. In this way, the proposed framework enables the new domain to keep equal similarities with the given domains and produce a mixture of existing sequential patterns.
    \item \textbf{A sequence fusion method using GANs}: To apply unsupervised fusion learning, we extend the generative adversarial model so that multiple generators and discriminators can be updated by one another. 
    \item \textbf{Formulation of an objective function indicating fusion progress}: An objective function based on Wasserstein distance is designed to integrate the information from multiple domains and represent learning progress.
    %\item \textbf{Conducting extensive experiments on real-world music datasets with detailed analysis}: With several competitive baselines, the proposed approach is examined on two publicly available music datasets. The results are analyzed in detail by statistical measures and a user study conducted via Amazon Mechanical Turk.
    %This should not be counted as the contribution
\end{itemize}

% The rest of this paper is organized as follows. Section \ref{rw} reviews related works. A detailed description of the proposed method is given in Section \ref{method}. Experiments on multiple public datasets and a user study are presented in Section \ref{exp}. We conclude the paper in Section \ref{conclusion}.

\section{Related Work}\label{rw}
%\textbf{Algorithmic Composition:} Recently, the use of computational intelligence in music composition has achieved remarkable results such as genetic algorithm\cite{hand1994genetic}, genetic programming\cite{koza1992genetic}, particle swarm optimization\cite{kennedy2011particle}, and ant colony optimization\cite{dorigo1997ant}. Bharucha\cite{bharucha1989modeling} proposed a model using neural networks to predict notes. Also, neural networks are applied to assist the evaluation of compositions\cite{biles1996neural}. Magenta, aiming to use machine learning to generate compelling music, is Google's music project that went open source in June 2016. Similarly, Deepjazz and BachBot also apply LSTM for creating music, Google's DeepMind proposed Wavenet\cite{oord2016wavenet} which is based on ConvNet. 
% {\color{red}
\textbf{Music Genre Fusion:} Fusion is mostly known as the sub-genre of jazz that emerged in the late '60s, and that combines several musical styles such as funk, rock, and blues with the jazz harmony and improvisation \cite{GeraldDavid2005}. The term can be generalized, however, to any combinations of music genres. The creative combination of two or more genres obviously requires not only intimate knowledge of the stylistic characteristics of the involved genres but also extensive compositional experience and skill to combine genres in a meaningful and satisfying way. %Additionally, the subjective nature of this task makes any algorithmic approach of this process difficult to implement. 
Recently, Engel et al.~\cite{engel2017neural} proposed a new approach of generating new musical sounds through interpolating the latent space learned by WaveNet \cite{oord2016wavenet} autoencoders. %Encouraged by their success of using DL to fuse instrumental sounds, we propose to further this idea and attempt the problem of music genre fusion with machine learning approaches.%{\color{yellow}this section seems to be too thin for related work. If there is nothing about music, maybe let's go for the images again as it's most relevant.}
Additionally, Gatys et al. demonstrated the Deep Neural Networks' (DNNs) capabilities of learning the artistic styles by blending the style of the training materials to the testing images using ConvNet \cite{Gatys2015}. Both examples show the great potential of using DNNs for merging two abstract concepts without the need for domain knowledge and explicitly defined rules. 
% } 

\textbf{Cross-domain GAN:} GAN \cite{goodfellow2014generative}, which has quickly risen to one of the most popular generative approaches, learns patterns without requiring adversarial examples or labels. In CGAN \cite{mirza2014conditional}, the generator $G$ has a conditional parameter $c$ and will learn the conditional distribution of the data. However, those conditions need to be provided manually, somewhat similar to supervised learning. CoGAN \cite{liu2016coupled} is proposed to learn a joint distribution with only samples drawn from the marginal distributions. DiscoGAN \cite{kim2017learning} solves the cross-domain pairing by minimizing bi-directional reconstruction loss, while DualGAN \cite{yi2017dualgan} takes advantage of the dual learning paradigm in machine translation. CycleGAN \cite{zhu2017unpaired} presents a method whose mapping functions are cycle-consistent and proposes a cycle consistency loss function to further reduce the space of possible mapping functions.

Different from the previous studies, our work raises a new problem, i.e., merging two sequential patterns into one. We propose a GAN-based framework that merges two domains without manually tuning the distance between the given domains and the target domain.

\section{Learning to Blend Music with FusionGAN}\label{method}
% In this section, we elaborate the problem definition and the detailed learning process for the proposed method.
%\subsection{Preliminary}

\subsection{Problem Definition}
This paper aims to characterize music fusion across different genres in an unsupervised manner. Based on the data $\XA$ and $\XB$ from existing music domains $\D_{A}, \D_{B}$ respectively, our goal is to create a new domain $\D_{F}$ which resembles both $\D_{A}$ and $\D_{B}$. In the first phase, the new domain $\D_{F}$ is expected to learn from $\D_{A}, \D_{B}$ and keep an equal distance from both.  Specifically, the generator $\GF$ of $\D_{F}$ obtains feedback from the discriminators $\DA,\DB$ of existing domains $\D_{A}, \D_{B}$, while the discriminator $\DF$ collects data from $\GF$ and  $\D_{A}, \D_{B}$ for iterative updating. The $\GF$ from the new domain minimizes the distance from the domains $\D_{A}, \D_{B}$, respectively, and keeps the same distance from $\D_{A}$ and $\D_{B}$. 

\subsection{FusionGAN}
First, the framework initializes individual GAN models for existing domains. Maximum likelihood estimation using Long Short Term Memory (LSTM) is applied to initialize the sequence generators $\GA(\GB)$. Then $\DA$($\DB$) are trained given the domain data and sequence sampled from $\GA(\GB)$. Then GAN is employed to iteratively enhance the $\GA(\GB)$ and $\DA$($\DB$). 
%Due to the model's limitations, only the sequence($\SL$) of fixed length($T$) can be evaluated by the discriminator. To estimate this incomplete portion of the sequence, we follow the SeqGAN\cite{yu2017seqgan} policy that the action-value function value is assigned with the expected rewards obtained from Monte Carlo sampling using LSTM.
After pre-training, we build a GAN model for the new domain using the feedback from the models constructed in the pre-training procedure. First, a new domain $\D_{F}$ is initialized randomly and trained by both $\D_{A}$ and $\D_{B}$. Following the dual learning strategy, $\D_{A}$ is enhanced by $\D_{B}$ and $\D_{F}$ in the second phase. Similarly, $\D_{B}$ is improved by $\D_{A}$ and $\D_{F}$. Such learning proceeds until convergence. The framework overview is shown in Figure \ref{fig:fusion_framework}.
%\end{itemize}

\begin{figure}[!hpbt]
    \centering
    \includegraphics[width=2.4in]{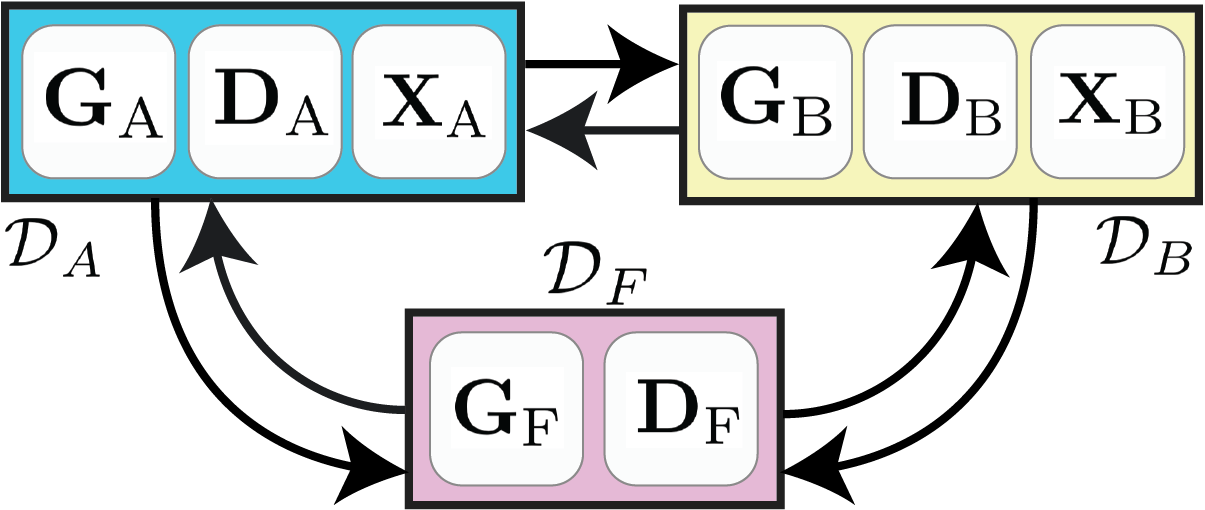}
    \caption{FusionGAN framework}
    \label{fig:fusion_framework}
\end{figure}

In FusionGAN, we employ the Wasserstein-1 distance for all discriminators because it is sensible when the learning distributions are supported by low dimensional manifolds \cite{arjovsky2017wasserstein}. 
To avoid the intractable infimum in Wasserstein-1 distance, we resort to its Kantorovich-Rubinstein duality \cite{villani2008optimal}: $ W\left( \Pp_{r},\Pp_{\theta }\right) =\sup_{\left\| f\right\| _{L}\leq 1}\E_{x\sim \Pp_{r}}\left[ f\left( x\right) \right]-\E_{x\sim \Pp_{\theta }}\left[ f\left( x\right) \right] $.
For the vanilla GAN, the goal is to find the optimal configuration of the parameters $\phi$ of discriminator ($f=\Dd$). When the discriminator is optimized, the maximized Wasserstein distance can be used as reward in the policy gradient process of the generator. Similarly, we define a three-way Wasserstein distance as the objective function: $W\left( \Pp_{\XA},\Pp_{\XB},\Pp_{\theta }\right)=\Ll $ to measure the integrated distance between $\Pp_{\theta}$ and $\Pp_{\XA},\Pp_{\XB}$. This measure should consider the relationship between every pair of domains. The framework consists of three pairs of generators and discriminators, i.e., $\GA-\DA, \GB-\DB, \GF-\DF$, and two input data, i.e., $\XA, \XB$. For each discriminator, there are five possible inputs, i.e., $\XA,\XB, \GA, \GB, \GF$. Collecting all possible inputs w.r.t. the three discriminators, the objective function to maximize is defined as:
{
\footnotesize
\begin{equation}\label{loss}
\begin{aligned}
\Ll &=&&W\left( \Pp_{\XA},\Pp_{\XB},\Pp_{\theta }\right) \\ 
&&&\E_{x\sim \Pp_{\XA}}\left[\DA\left( x \right) \right]-\E_{x\sim \Pp_{\XB}}\left[\DA\left( x \right) \right] - \\
&&& \E_{z\sim p(z)}\left[\DA\left( \GA\left( z\right) \right)\right] - \E_{z\sim p(z)}\left[\DA\left( \GB\left( z\right) \right) \right]-\\
&&&\E_{z\sim p(z)}\left[\DA\left( \GF\left( z\right) \right)\right] - \\
&&&\E_{x\sim \Pp_{\XA}}\left[\DB\left( x \right) \right]+ \E_{x\sim \Pp_{\XB}}\left[\DB\left( x \right) \right]-\\
&&&\E_{z\sim p(z)}\left[\DB\left( \GA\left( z\right) \right)\right] - \E_{z\sim p(z)}\left[\DB\left( \GB\left( z\right) \right) \right]-\\
&&&\E_{z\sim p(z)}\left[\DB\left( \GF\left( z\right) \right)\right] +\\
&&&\E_{x\sim \Pp_{\XA}}\left[\DF\left( x \right) \right]+ \E_{x\sim \Pp_{\XB}}\left[\DF\left( x \right) \right] +\\
&&&\E_{z\sim p(z)}\left[\DF\left( \GA\left( z\right) \right)\right] + \E_{z\sim p(z)}\left[\DF\left( \GB\left( z\right) \right) \right]-\\
&&&\E_{z\sim p(z)}\left[\DF\left( \GF\left( z\right) \right)\right],\\
\end{aligned} 
\end{equation}
}where the first five terms are for $\DA$, the second five terms are for $\DB$, and the last five terms are for $\DF$. This loss is used to update all the generators and discriminators. The following two subsections, \ref{sec:f_fusiongan} and \ref{sec:ab_fusiongan},  will present the update rules using Eq. \ref{loss}.

\subsection{$\D_{F}$ Update of FusionGAN}\label{sec:f_fusiongan}
Removing the terms that are unrelated to $\D_{F}$ in Eq. \ref{loss} (zero derivative w.r.t. $\DF$ and $\GF$), we have the objective function for updating $\DF$ and $\GF$:
{
\footnotesize
\begin{equation}
\begin{aligned}
\Ll_{F} &=&&\E_{x\sim \Pp_{\XA}}\left[\DF\left( x \right) \right]+ \E_{x\sim \Pp_{\XB}}\left[\DF\left( x \right) \right] +\\
&&&\E_{z\sim p(z)}\left[\DF\left( \GA\left( z\right) \right)\right] + \E_{z\sim p(z)}\left[\DF\left( \GB\left( z\right) \right) \right]-\\
&&&\E_{z\sim p(z)}\left[\DF\left( \GF\left( z\right) \right)\right]-\\
&&&\E_{z\sim p(z)}\left[\DA\left( \GF\left( z\right) \right)\right] - \\
&&&\E_{z\sim p(z)}\left[\DB\left( \GF\left( z\right) \right)\right]. \\
\end{aligned} 
\end{equation}
}First, $\Ll_{F}$ is calculated w.r.t. $\GF$ so as to update $\GF$. Considering differentiating $\nabla _{\theta}\Ll_{F}$, the optimal $\GF$ is approximated as its convergence. The proof is provided below to show that this derivative is principled under the optimality assumption.

\begin{theorem}[FusionGAN Optimality Theorem]\label{theorem}
Let $\Pp_{\XA}$ and $\Pp_{\XB}$ be any distribution, Let $\Pp_{\theta}$ be the distribution of $\GF_{\theta}(Z)$ with $Z$, a random variable with density $p$ and $\GF_{\theta}$ a function satisfying $\E_{z\sim p}[L(\theta,z)] < +\infty$, where $L$ is Lipschitz constants. All functions are well-defined. Then there is a solution $\DF:\X\rightarrow\Rd$  to the problem:
{
\footnotesize
\begin{equation}
W\left( \Pp_{\XA},\Pp_{\XB},\Pp_{\theta }\right)=\max _{\left\| \DF\right\| _{L}\leq 1} \Ll_{F}.
\end{equation}} and we have:
{
\footnotesize
\begin{equation}\label{opttheorem}
\begin{aligned}
\nabla _{\theta }\Ll_{\GF} =&\nabla _{\theta }W\left( \Pp_{\XA},\Pp_{\XB},\Pp_{\theta}\right)&\\
=&-\E_{z\sim p(z)} \nabla _{\theta } \sum_{i\in \{A,B,F\}}\Dd_{i}\left( \GF\left( z\right) \right)&\\
%=&-\E_{z\sim p(z)}\left[\nabla _{\theta } \DF\left( \GF\left( z\right) \right)\right]&\\
%&-\E_{z\sim p(z)}\left[\nabla _{\theta }\DA\left( \GF\left( z\right) \right)\right] &\\
%&-\E_{z\sim p(z)}\left[\nabla _{\theta }\DB\left( \GF\left( z\right) \right)\right] &\\
\end{aligned} 
\end{equation}
}

\end{theorem}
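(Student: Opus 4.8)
The plan is to adapt the Wasserstein GAN optimality argument \cite{arjovsky2017wasserstein} to the three-way objective $\Ll_F$. First I would isolate the dependence on $\theta$. Because $\DA$ and $\DB$ are held fixed while $\GF$ is updated, and the data terms $\E_{x\sim\Pp_{\XA}}[\DF(x)]$, $\E_{x\sim\Pp_{\XB}}[\DF(x)]$ together with the sampled terms $\E_{z}[\DF(\GA(z))]$ and $\E_{z}[\DF(\GB(z))]$ carry no $\theta$-dependence, the only contributions varying with $\theta$ enter through $\GF_{\theta}(z)$, namely $-\E_{z}[\DF(\GF_{\theta}(z))]-\E_{z}[\DA(\GF_{\theta}(z))]-\E_{z}[\DB(\GF_{\theta}(z))]$. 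I would write $\Ll_F=V(\DF,\theta)$, affine in $\DF$, and define $W(\Pp_{\XA},\Pp_{\XB},\Pp_\theta)=\max_{\|\DF\|_L\le 1}V(\DF,\theta)$.

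Next I would establish existence of a maximizer $\DF^{\ast}$ and regularity of $W$ in $\theta$. Normalizing $1$-Lipschitz functions at a base point, the admissible family is compact for uniform convergence on compacts by the Arzel\`a-Ascoli theorem, and $\DF\mapsto V(\DF,\theta)$ is continuous, so the supremum is attained; this is the same attainment that underlies the Kantorovich-Rubinstein duality \cite{villani2008optimal}. For regularity, the hypothesis $\E_{z\sim p}[L(\theta,z)]<+\infty$ together with the $1$-Lipschitz bound on each discriminator controls differences such as $\E_{z}[\DF(\GF_{\theta}(z))]-\E_{z}[\DF(\GF_{\theta'}(z))]$ by $\E_{z}[\,\|\GF_{\theta}(z)-\GF_{\theta'}(z)\|\,]\le\E_{z}[L(\theta,z)]\,\|\theta-\theta'\|$, so $\theta\mapsto W$ is locally Lipschitz and hence differentiable almost everywhere by Rademacher's theorem.

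At any $\theta$ where $W$ is differentiable, I would invoke an envelope (Danskin-type) argument with $V$ taken to be the full objective $\Ll_F$, the fixed $\DA,\DB$ terms entering $V$ only through $\GF_{\theta}$ as constants of the maximization. Fixing an optimal $\DF^{\ast}$ for $\theta$, the definition of the maximum gives $W(\Pp_{\XA},\Pp_{\XB},\Pp_{\theta'})\ge V(\DF^{\ast},\theta')$ for every $\theta'$, with equality at $\theta'=\theta$; hence $\theta'\mapsto W-V(\DF^{\ast},\theta')$ attains its minimum of $0$ at $\theta$, and differentiating forces $\nabla_\theta W=\nabla_\theta V(\DF^{\ast},\theta)$. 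Dominated convergence, justified by the integrable Lipschitz bound, then moves the gradient inside the expectation; since every $\theta$-dependent term of $V$ has the form $\Dd_i(\GF_{\theta}(z))$ for $i\in\{A,B,F\}$ with $\Dd_F=\DF^{\ast}$, this yields $\nabla_\theta W=-\E_{z}[\nabla_\theta\sum_{i\in\{A,B,F\}}\Dd_i(\GF(z))]$, the stated identity.

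The main obstacle I anticipate is the envelope step together with the interchange of differentiation and supremum. The maximizer $\DF^{\ast}$ need not be unique, and the minimality argument is legitimate only where $W$ is differentiable, which Rademacher guarantees merely almost everywhere; I would therefore state the gradient identity at differentiability points and argue this is adequate for the generator update. A related subtlety is that $\DF^{\ast}$ is selected by only the $\DF$-dependent part of $\Ll_F$, whereas the final gradient also collects the fixed terms $\DA,\DB$ through their dependence on $\GF_{\theta}$; I would check that all three summands $i\in\{A,B,F\}$ satisfy the same integrable-Lipschitz domination, so that differentiation under the expectation is valid uniformly and the three contributions combine into the single $\sum_{i\in\{A,B,F\}}$ form.
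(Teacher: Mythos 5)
Your proposal follows essentially the same route as the paper's proof, which adapts the Wasserstein GAN optimality argument: isolate the three $\theta$-dependent terms $\Dd_{i}(\GF_{\theta}(z))$, obtain a maximizing $\DF$ from the Kantorovich--Rubinstein attainment, equate $\nabla_{\theta}W$ with $\nabla_{\theta}V(\DF,\theta)$ at that maximizer, and use the integrable Lipschitz bound with dominated convergence to move $\nabla_{\theta}$ inside $\E_{z\sim p(z)}$. The differences are only presentational --- you spell out Arzel\`a--Ascoli and a Danskin-type envelope step where the paper cites Theorem 1 of the WGAN paper and bounds the difference quotient by $6L(\theta_{0},z)$ --- so the two arguments coincide.
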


\begin{proof}[Proof of Theorem \ref{theorem}]
Let us define $ \Ll_{F}= V(\tilde{\DF},\theta)$ where $\tilde{\DF}$ lies in $\F=\{\tilde{\DF}:\X \rightarrow \Rd, \tilde{\DF} \in C_{b}(\X), \left\|\DF\right\| _{L}\leq 1 \}$ and $\theta \in \Rd_{d}$. By the Kantorovich-Rubinstein duality,  there is an $f \in \F$ that satisfies: 
{
\footnotesize
\begin{equation*}
    W\left( \Pp_{\XA},\Pp_{\XB},\Pp_{\theta }\right) =\sup_{\tilde{\DF}\in\F}V(\tilde{\DF},\theta)=V(\DF,\theta).
\end{equation*}
}
Let us define $X^{*}(\theta)=\{\DF\in\F: V(\DF,\theta)=W\left( \Pp_{\XA},\Pp_{\XB},\Pp_{\theta }\right)\}$, which shows that $X^{*}(\theta)$ is non-empty. According to Theorem 1 in \cite{arjovsky2017wasserstein}: (1) if $\GF$ is continuous in $\theta$, so is $W\left( \Pp_{\XA},\Pp_{\XB},\Pp_{\theta }\right)$, (2) If $\GF$  is locally Lipschitz and satisfies $\E_{z\sim p}[L(\theta,z)] < +\infty$, then $W\left( \Pp_{\XA},\Pp_{\XB},\Pp_{\theta }\right)$ is continuous and differentiable for any $\DF \in X^{*}(\theta)$ when both terms are well-defined. Let $\DF \in X^{*}(\theta)$, which we knows exists since $X^{*}(\theta)$is non-empty for all $\theta$. Then:
{\footnotesize
\begin{equation}\label{wdis}
\begin{aligned}
&\nabla _{\theta }W\left( \Pp_{\XA},\Pp_{\XB},\Pp_{\theta }\right)&\\
=&\nabla _{\theta }V(\DF,\theta)=\nabla _{\theta }\Ll_{F}&\\
%=&\nabla _{\theta }\lbrack \E_{x\sim \Pp_{\XA}}\left[\DF\left( x \right) \right]+ \E_{x\sim \Pp_{\XB}}\left[\DF\left( x \right) \right] +&\\
%&\E_{z\sim p(z)}\left[\DF\left( \GA\left( z\right) \right)\right] + \E_{z\sim p(z)}\left[\DF\left( \GB\left( z\right) \right) \right]-&\\
%&\E_{z\sim p(z)}\left[\DF\left( \GF\left( z\right) \right)\right]-\E_{z\sim p(z)}\left[\DA\left( \GF\left( z\right) \right)\right] - &\\
%&\E_{z\sim p(z)}\left[\DB\left( \GF\left( z\right) \right)\right] \rbrack&\\
%=&\nabla _{\theta }\lbrack -\E_{z\sim p(z)}\left[\DF\left( \GF\left( z\right) \right)\right]-\E_{z\sim p(z)}\left[\DA\left( \GF\left( z\right) \right)\right] - &\\
%&\E_{z\sim p(z)}\left[\DB\left( \GF\left( z\right) \right)\right] \rbrack&\\
=&-\nabla _{\theta }\E_{z\sim p(z)}\lbrack \DF\left( \GF\left( z\right) \right)+\DA\left( \GF\left( z\right) \right) +\DB\left( \GF\left( z\right) \right) \rbrack, &
\end{aligned}
\end{equation}}since $\DF$ is 1-Lipschitz. Furthermore, $\GF_{\theta}(z)$ is locally Lipschitz as a function of ($\theta, z$). Therefore, $\GF_{\theta}(z)$ is locally Lipschitz on ($\theta, z$) with constants $L(\theta, z)$. By Radamacher’s Theorem, $\DF(\GF_{\theta}(z))$ has to be differentiable almost everywhere for ($\theta, z$) jointly. Rewriting this, the set $A = \{(\theta, z) : \DF  \circ \GF $ is not differentiable$\}$ has measure 0. By Fubini’s Theorem, this implies that for almost every $\theta$ the section $A_{\theta} = \{z : (\theta, z) \in A\}$ has measure 0. Let's fix a $\theta_{0}$ such that the measure of $A_{\theta_{0}}$ is null. For this $\theta_{0}$ we have $\nabla _{\theta}\DF(\GF_{\theta}(z))|\theta_{0}$ is well-defined for almost any $z$, and since $p(z)$ has a density, it is defined $p(z)$ almost everywhere. Given the condition $\E_{z\sim p}[L(\theta,z)] < +\infty$, we have
{\footnotesize
\begin{equation}
    \E_{z\sim p(z)}[\left\|\nabla _{\theta}\DF(\GF_{\theta}(z))\right\|] \leq \E_{z\sim p(z)}[L(\theta_{0},z)] < +\infty,
\end{equation}
}so $\E_{z\sim p(z)}[\nabla _{\theta}f(g_{\theta}(z))] $ is well-defined for almost every $\theta_{0}$. To keep the notation simple, we leave it implicit in the following equation that the $\E$ subjects to $z\sim p(z)$.  :
{\footnotesize
\begin{equation}\label{wdisapp}
\begin{aligned}
&\frac{1}{\left\| \theta-\theta_{0} \right\|}\sum_{i} \bigg[\E[\Dd_{i}(\GF_{\theta}(z))] -\E[\Dd_{i}(\GF_{\theta_{0}}(z))] -\\
&\langle(\theta-\theta_{0}),\E\lbrack\nabla _{\theta}[\Dd_{i}(\GF_{\theta}(z))]\rbrack|_{\theta_{0}}\rangle \bigg]\\
=&\frac{1}{\left\| \theta-\theta_{0} \right\|}\sum_{i} \E \bigg[ \Dd_{i}(\GF_{\theta}(z)) -\Dd_{i}(\GF_{\theta_{0}}(z)) -\\
&\langle(\theta-\theta_{0}),\nabla _{\theta}[\Dd_{i}(\GF_{\theta}(z))]|_{\theta_{0}}\rangle \bigg]\\
\leq & \frac{1}{\left\| \theta-\theta_{0} \right\|}\sum_{i} \big\| \left\| \theta-\theta_{0} \right\|L(\theta_{0},z) + \left\| \theta-\theta_{0} \right\|\cdot\|\nabla _{\theta}[\Dd_{i}(\GF_{\theta}(z))]\big\|\\
\leq & \sum_{i}2L(\theta_{0},z) = 6L(\theta_{0},z),
\end{aligned}
\end{equation}}where $i \in \{A,B,F\}$. By differentiability, the term inside the integral converges $p(z)$ to 0 as $\theta \rightarrow \theta_{0}$. Furthermore, and since $\E_{ z \sim p(z)} [6L(\theta_{0},z) ] < +\infty$, we get by dominated convergence that Eq. \ref{wdis} converges to 0 as $\theta \rightarrow \theta_{0}$. So the result of Eq. \ref{wdis} equals to: 
{\footnotesize
\begin{equation*}
-\E_{z\sim p(z)}\lbrack \nabla _{\theta }\DF\left( \GF\left( z\right) \right)+\nabla _{\theta }\DA\left( \GF\left( z\right) \right) +\nabla _{\theta }\DB\left( \GF\left( z\right) \right) \rbrack.
%    \nabla _{\theta}\E_{z\sim p(z)}[f(g_{\theta}(z))] =\E_{z\sim p(z)}[\nabla _{\theta}f(g_{\theta}(z))] 
\end{equation*}
}
\end{proof}

Therefore, the update rules for the parameters $\theta$ of $\GF$ is Eq. \ref{opttheorem}.
%{
%\small
%\begin{equation}\label{fg_update}
%\begin{aligned}
%\nabla _{\theta}\Ll_{F} =-\E_{z\sim p(z)}\big[\nabla _{\theta }\DA\left( \GF\left( z\right) \right) +\\
%\nabla _{\theta }\DB\left( \GF\left( z\right) \right) +\\
%\nabla _{\theta }\DF\left( \GF\left( z\right) \right) \big]\\
%\end{aligned} 
%\end{equation}
%}
\begin{figure}[!hpbt]
    \centering
    \includegraphics[width=2.2in]{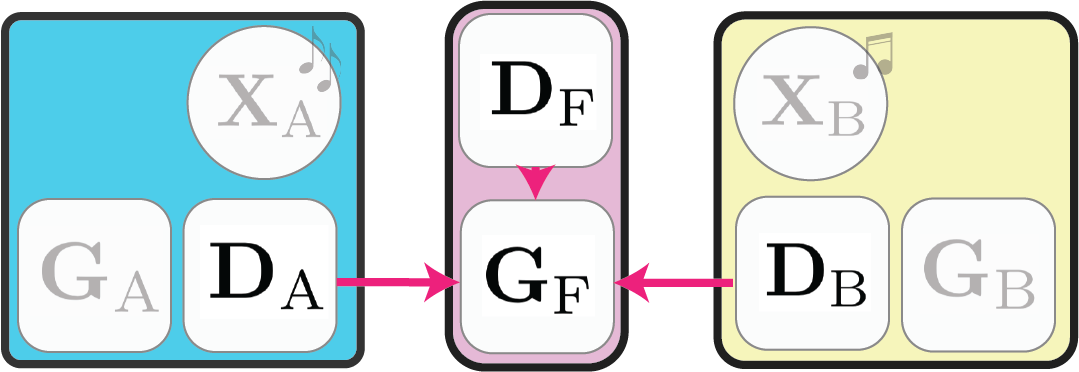}
    \caption{FusionGAN: update $\GF$}\label{fig:fg_update}
\end{figure}

The physical meaning is that all three discriminators can offer feedback to generator $\GF$ as shown in Fig. \ref{fig:fg_update}. The left cyan rectangle indicates $\D_{A}$, the middle pink one is $\D_{F}$, and the right yellow one $\D_{B}$. Similarly, the gradient w.r.t. the parameters $\phi$ of $\DF$ is:
{
\footnotesize
\begin{equation}\label{fd_update}
\begin{aligned}
\nabla_{\phi}\Ll_{F} &=&&\nabla_{\phi}\big[\E_{x\sim \Pp_{\XA}}\left[\DF\left( x \right) \right]+ \E_{x\sim \Pp_{\XB}}\left[\DF\left( x \right) \right] +\\
&&&\E_{z\sim p(z)}\left[\DF\left( \GA\left( z\right) \right)\right] + \E_{z\sim p(z)}\left[\DF\left( \GB\left( z\right) \right) \right]-\\
&&&\E_{z\sim p(z)}\left[\DF\left( \GF\left( z\right) \right)\right]\big].\\
\end{aligned} 
\end{equation}
}

\begin{figure}[!hpbt]
    \centering
    \includegraphics[width=2.2in]{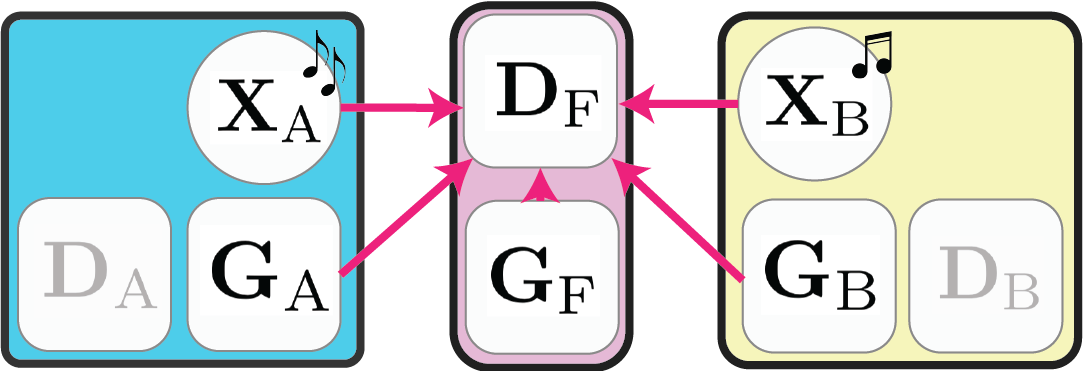}
    \caption{FusionGAN: update $\DF$}
    \label{fig:df_update}
\end{figure}

As shown in Fig. \ref{fig:df_update}, the update for $\DF$ actually implicitly balances itself between $\D_{A}$ and $\D_{B}$. However, this update does not cover the degree of blending which may lead to an unbalanced mixture.  Specifically, a good learning process for the discriminator $\DF$ should consider two factors: (1) minimizing the distance with $\D_{A}$, $\D_{B}$ and $\D_{F}$ simultaneously and (2) maintaining a equal distance from $\D_{A}$ and $\D_{B}$ so as to satisfy both domains. To satisfy (2) and push the discriminator $\DF$ to have a well-proportioned blending, an additional constraint is employed to further balance the ratio of existing domains, i.e., $\D_{A}, \D_{B}$:
{
\footnotesize
\begin{equation}\label{fd_update_add}
\begin{aligned}
\Ll_{F-bal} &=&& \left\| \E_{z\sim p(z)}\left[\DF\left( \GA\left( z\right) \right)\right] - \E_{z\sim p(z)}\left[\DF\left( \GB\left( z\right) \right) \right]\right\| + \\
&&& \left\| \E_{x\sim \Pp_{\XA}}\left[\DF\left( x \right) \right] - \E_{x\sim \Pp_{\XB}}\left[\DF\left( x \right) \right]\right\|.
\end{aligned} 
\end{equation}
}

\subsection{$\D_{A}$ and $\D_{B}$ Update of FusionGAN}\label{sec:ab_fusiongan}
After updating  $\D_{F}$, FusionGAN will improve $\D_{A}$ and $\D_{B}$. Since $\D_{A}$ and $\D_{B}$ are symmetric in the framework, we only discuss $\D_{A}$ as shown in Fig. \ref{fd_update_add}. Keeping related terms in Eq. \ref{loss}, we get the loss function for $\D_{A}$:
{
\footnotesize
\begin{equation}\label{aloss}
\begin{aligned}
\Ll_{A} &=&&\E_{x\sim \Pp_{\XA}}\left[\DA\left( x \right) \right]-\E_{x\sim \Pp_{\XB}}\left[\DA\left( x \right) \right] - \\
&&& \E_{z\sim p(z)}\left[\DA\left( \GA\left( z\right) \right)\right] - \E_{z\sim p(z)}\left[\DA\left( \GB\left( z\right) \right) \right]-\\
&&&\E_{z\sim p(z)}\left[\DA\left( \GF\left( z\right) \right)\right] -\\
&&&\E_{z\sim p(z)}\left[\DB\left( \GA\left( z\right) \right)\right] +\\
&&&\E_{z\sim p(z)}\left[\DF\left( \GA\left( z\right) \right)\right].
\end{aligned} 
\end{equation}
}
\begin{figure}[!hpbt]
    \centering
    \includegraphics[width=2.1in]{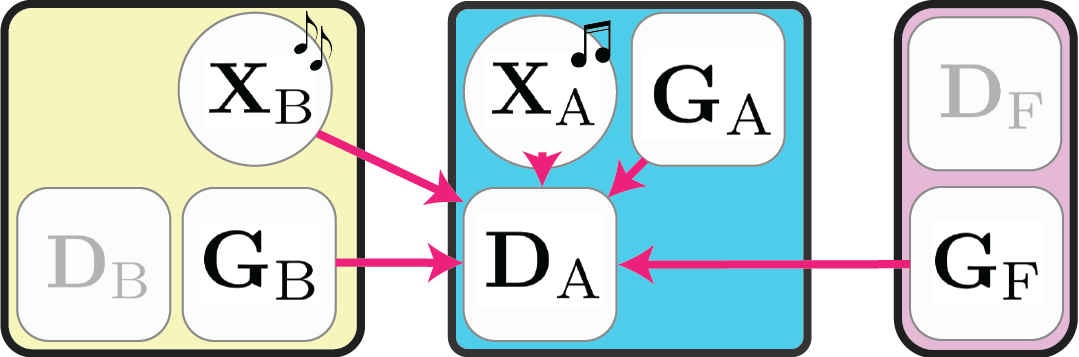}
    \caption{FusionGAN: update $\DA$}\label{fig:ad_update}
\end{figure}

Taking the derivative of $\Ll_{A}$ w.r.t. $\DA$, we have
{
\footnotesize
\begin{equation}\label{ad_update}
\begin{aligned}
\nabla_{\DA}\Ll_{A} &=&&\nabla_{\DA}\big[\E_{x\sim \Pp_{\XA}}\left[\DA\left( x \right) \right]-\E_{x\sim \Pp_{\XB}}\left[\DA\left( x \right) \right] - \\
&&& \E_{z\sim p(z)}\left[\DA\left( \GA\left( z\right) \right)\right] - \E_{z\sim p(z)}\left[\DA\left( \GB\left( z\right) \right) \right]-\\
&&&\E_{z\sim p(z)}\left[\DA\left( \GF\left( z\right) \right)\right] \big]. \\
\end{aligned} 
\end{equation}
}

There are four negative terms and one positive term in Eq. \ref{ad_update} w.r.t. $\DA$. The training implicitly increases the positive terms and decreases the value of the negative terms. However, the inequalities in the two groups are not controlled. To further control the distance among the three domains, we propose an inequality constraint for $\DA$:
{
\footnotesize
\begin{equation}\label{ad_update_add}
\begin{aligned}
%&\DA\left(\XA \right) \geq\DA\left( \GF\left( z\right)  \right), \\
&\DA\left( \XA\right) \geq\DA\left( \GF\left( z\right)  \right)  \geq\DA\left( \XB\right) 
\end{aligned} 
\end{equation}
}

Then combining Eq. \ref{ad_update_add} into loss function Eq. \ref{ad_update}, it can be rewritten as:
{
\footnotesize
\begin{equation}\label{ad_update_add_rew}
\begin{aligned}
%&\DA\left(\XA \right) \geq\DA\left( \GF\left( z\right)  \right), \\
&\Ll_{A-bal} = \|\DA\left(\XA\right) -\DA\left(\GF\left(z\right) \right)\| +\| \DA\left(\GF\left(z\right)\right)-\DA\left(\XB\right)\|.
\end{aligned} 
\end{equation}
}
\begin{figure}[!hpbt]
    \centering
    \includegraphics[width=2.1in]{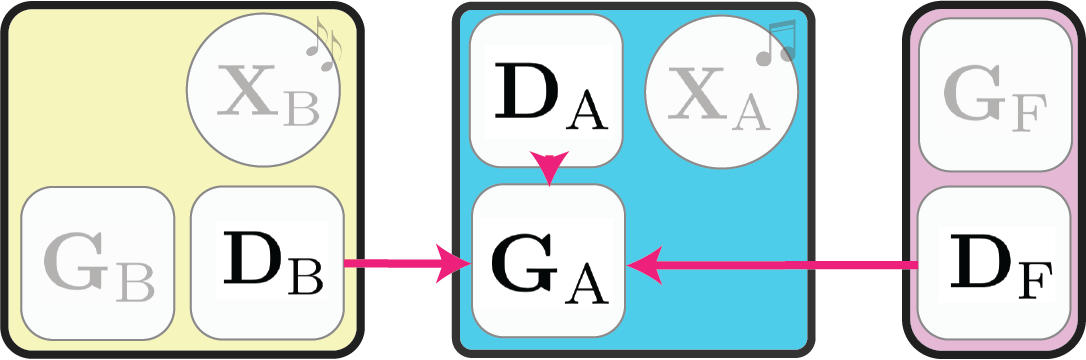}
    \caption{FusionGAN: update $\GA$}\label{fig:ag_update}
\end{figure}

Some inequalities that have been implicitly included in GAN training are not covered in Eq. \ref{ad_update_add}, such as $\DA\left( \XA\right) \geq \DA\left( \GA\right)$. Similar to the $\GF$ update in Fig. \ref{fig:ag_update}, the derivative of $\Ll_{A}$ w.r.t. $\GA$ is:
{
\footnotesize
\begin{equation}\label{ag_update}
\begin{aligned}
\nabla_{\GA}\Ll_{A} &=&& \E_{z\sim p(z)}\nabla_{\GA}\big[-\DA\left( \GA\left( z\right) \right) -\\
&&&\DB\left(\GA\left(z\right)\right) +\DF\left(\GA\left(z\right)\right)\big].
\end{aligned} 
\end{equation}
}

\subsection{Algorithm Description}

{\footnotesize
\begin{algorithm}[!h]
    \caption{FusionGAN}
    \label{algo:fgan}
    \SetAlgoLined
    \KwIn{Input data $\XA,\XB$ from $\D_{A},\D_{B}$}
    \KwOut{a generator and a discriminator: $\GF and \DF$}
    Randomly initialize $\GA, \DA, \GB, \DB, \GF, \DF$, generators/discriminators employ LSTM/TextCNN\;\label{algo:init}
    \tcp{Pre-training}
    Train $\GA$ using maximum likelihood estimation on real data $\XA$\;\label{sgan_start}
    Generate negative samples $\NA$ using $\GA$\; \label{gen_neg}
    Train binary classifier $\DA$ with $\NA$ and $\XA$\; \label{min_entro}
    \Repeat{$\DA$ converges}{ \label{pregan_start}
        \For{$\GA$ training}
        {
            Sample a sequence $\SL_{1:T}=(s_{1},...,s_{T}) \sim \GA$\;
            Derive $Q$ value  as rewards $\R$ from $\DA$\;
            Update parameters $\theta$ of $\GA$ by policy gradient: $\theta \leftarrow \theta+\alpha\nabla_{\theta}\R$  \;
        }
        \For{$\DA$ training}
        {
            Generate negative examples $\sim \GA$\;
            Train $\DA$ with negative samples and $\XA$\;
        }
    }\label{sgan_end}
    Repeat line \ref{sgan_start} to \ref{sgan_end} with $\D_{B}$, i.e., $\DB, \GB, \XB$\;
    Repeat line \ref{sgan_start} to \ref{min_entro} with $\D_{F}$, i.e., $\DF, \GF, \mathbf{X}_{F}=\XA+\XB$\;
    \tcp{FusionGAN training}
    \Repeat{$\DA, \DB, \DF$ converges}{
        \For{$\D_{F}$ training}{\label{fusion_f_start}
            Update parameters of $\DF$ using the sum of Eq. \ref{fd_update} and Eq. \ref{fd_update_add} \label{fusion_df}: $\theta_{\DF} \leftarrow \theta_{\DF}+\alpha\nabla_{\DF}(\Ll_{F}+\Ll_{F-bal}) $ \; 
            Update parameters of $\GF$ using Eq. \ref{opttheorem} $\theta_{\GF} \leftarrow \theta_{\GF}+\alpha\nabla_{\GF}\Ll_{F} $ \;\label{fusion_gf}
        }\label{fusion_f_end}
        \For{$\D_{A}$ training}{\label{fusion_ab_start}
          Update parameters of $\DA$ using the sum of Eq. \ref{ad_update} and Eq. \ref{ad_update_add_rew} :  $\theta_{\DA} \leftarrow \theta_{\DA}+\alpha\nabla_{\DA}(\Ll_{A}+\Ll_{A-bal}) $\; 
        Update parameters of $\GA$ using Eq. \ref{ag_update}: $\theta_{\GA} \leftarrow \theta_{\GA}+\alpha\nabla_{\GA}\Ll_{A} $\;
        }\label{fusion_ab_end}
        Repeat line \ref{fusion_ab_start} to \ref{fusion_ab_end} with $\D_{B}$, i.e., $\DB, \GB, \XB$\;
    }    
\end{algorithm}
}
The Algorithm \ref{algo:fgan}\footnote{https://github.com/aquastar/fusion\_gan} initializes the parameters of the discriminators and generators (line \ref{algo:init}). TextCNN \cite{zhang2015text} and LSTM are employed to build the discriminators and generators respectively. Given  $\XA$, the generator is updated by maximum likelihood estimation (line \ref{sgan_start}). From lines \ref{gen_neg} to \ref{min_entro}, $\DA$ receives the sequences generated by $\GA$ as negative data, and real data as positive samples. Between lines \ref{pregan_start} and \ref{sgan_end}, a GAN procedure is applied to optimize $\DA$ and $\GA$. $\DA$ returns regression scores that are treated as rewards for the policy gradient of $\GA$. Similarly, $\DA$ improves itself by the real world data $\XA$ and the data generated by $\GA$. Our method proceeds to update $\DB$ and $\GB$ by using the same process from lines \ref{sgan_start} to \ref{sgan_end}. $\D_{F}$ repeats the same pre-training procedure from \ref{sgan_start} to \ref{sgan_end} with $\DF, \GF, \mathbf{X}_{F}=\XA+\XB$. From line \ref{fusion_f_start} to \ref{fusion_ab_end}, the framework iteratively updates the generators and discriminators from $\D_{F}, \D_{A}, \D_{B}$. Lines \ref{fusion_f_start} to \ref{fusion_f_end} are for $\D_{F}$, while lines \ref{fusion_ab_start} to \ref{fusion_ab_end} are for $\D_{A}$ and $\D_{B}$. The update rules for the parameters of $\DF$ and $\GF$ have been covered in subsection (\ref{sec:f_fusiongan}), while those for the parameters of $\GA$ and $\DA$ are described in subsection (\ref{sec:ab_fusiongan}). Again, $\D_{B}$ follows the same process as that of $\D_{A}$. The algorithm stops when the discriminators converge.
  
% For peer review papers, you can put extra information on the cover
% page as needed:
% \ifCLASSOPTIONpeerreview
% \begin{center} \bfseries EDICS Category: 3-BBND \end{center}
% \fi
%
% For peerreview papers, this IEEEtran command inserts a page break and
% creates the second title. It will be ignored for other modes.
\IEEEpeerreviewmaketitle

\section{Evaluation}\label{exp}

\begin{figure*}[!hbpt]
        \centering
        {
            \subfigure{
                \begin{tikzpicture}
                    \begin{axis}[
                        xtick={0,5,10,15},
                        yticklabels={,,},
                        scaled y ticks={base 10:0},
                        legend pos=north east,
                        ymajorgrids=true,
                        grid style=dashed,
                        legend style={font=\tiny},
                    ]
                    \addplot[
                        color=orange,
                        mark=*,
                    ]
                    coordinates {
(0,124689) (1,12475) (2,5047) (3,2573) (4,1493) (5,1024) (6,694) (7,451) (8,355) (9,264) (10,208) (11,167) (12,105) (13,59) (14,50) (15,48) (16,32) (17,25) (18,25) (19,31)                };
                    \legend{jazz}

                    \end{axis}
                \end{tikzpicture}
                \label{}
            }\hspace{0pt}%
            \subfigure{
                \begin{tikzpicture}
                    \begin{axis}[
                        xtick={0,5,10,15},
                        yticklabels={,,},
                        scaled y ticks={base 10:0},
                        legend pos=north east,
                        ymajorgrids=true,
                        grid style=dashed,
                        legend style={font=\tiny},
                    ]
                    \addplot[
                        color=magenta,
                        mark=*,
                    ]
                    coordinates {
(0,18739) (1,8697) (2,1390) (3,997) (4,109) (5,96) (6,20) (7,21) (8,4) (9,1) (10,4) (11,0) (12,0) (13,3) (14,0) (15,0) (16,0) (17,0) (18,0) (19,0)                  };
                    \legend{folk}

                    \end{axis}
                \end{tikzpicture}
                \label{}
            }\hspace{0pt}%
            \subfigure{
                \begin{tikzpicture}
                    \begin{axis}[
                        xtick={0,5,10,15},
                        yticklabels={,,},
                        legend pos=north east,
                        ymajorgrids=true,
                        grid style=dashed,
                        legend style={font=\tiny},
                    ]
                    \addplot[
                        color=cyan,
                        mark=triangle*,
                    ]
                    coordinates {
(0,8047) (1,594) (2,248) (3,157) (4,99) (5,61) (6,50) (7,51) (8,35) (9,21) (10,19) (11,16) (12,11) (13,10) (14,14) (15,4) (16,0) (17,0) (18,1) (19,0)                     };
                    \legend{GAN}

                    \end{axis}
                \end{tikzpicture}
            }\hspace{0pt}%
            \subfigure{
                \begin{tikzpicture}
                    \begin{axis}[
                        xtick={0,5,10,15},
                        yticklabels={,,},
                        legend pos=north east,
                        ymajorgrids=true,
                        grid style=dashed,
                        legend style={font=\tiny},
                    ]
                    \addplot[
                        color=cyan,
                        mark=triangle*,
                    ]
                    coordinates {
(0,7970) (1,1080) (2,389) (3,232) (4,112) (5,73) (6,49) (7,29) (8,27) (9,25) (10,13) (11,10) (12,10) (13,7) (14,3) (15,0) (16,0) (17,0) (18,0) (19,0)                     };
                    \legend{MLE}

                    \end{axis}
                \end{tikzpicture}
                \label{}
            }\hspace{0pt}%
            \subfigure{
                \begin{tikzpicture}
                    \begin{axis}[
                        xtick={0,5,10,15},
                        yticklabels={,,},
                        legend pos=north east,
                        ymajorgrids=true,
                        grid style=dashed,
                        legend style={font=\tiny},
                    ]
                    \addplot[
                        color=cyan,
                        mark=triangle*,
                    ]
                    coordinates {
(0,555) (1,2078) (2,386) (3,291) (4,145) (5,111) (6,74) (7,40) (8,39) (9,17) (10,17) (11,9) (12,8) (13,8) (14,5) (15,3) (16,1) (17,1) (18,1) (19,2)                     };
                    \legend{RL}

                    \end{axis}
                \end{tikzpicture}
                \label{}
            }\hspace{0pt}%
             \subfigure{
                \begin{tikzpicture}
                    \begin{axis}[
                        xtick={0,5,10,15},
                        yticklabels={,,},
                        scaled y ticks={base 10:0},
                        legend pos=north east,
                        ymajorgrids=true,
                        grid style=dashed,
                        legend style={font=\tiny},
                    ]
                    \addplot[
                        color=cyan,
                        mark=triangle*,
                    ]
%                    coordinates {
%(0,13239) (1,2128) (2,536) (3,333) (4,148) (5,112) (6,69) (7,52) (8,39) (9,28) (10,18) (11,15) (12,10) (13,5) (14,5) (15,3) (16,1) (17,3) (18,1) (19,2)                };
%                    \legend{MC-J}
%
%                    \end{axis}
%                \end{tikzpicture}
%                \label{}
%            }\hspace{0pt}%
%            \subfigure{
%                \begin{tikzpicture}
%                    \begin{axis}[
%                        xtick={0,5,10,15},
%                        yticklabels={,,},
%                        legend pos=north east,
%                        ymajorgrids=true,
%                        scaled y ticks={base 10:0},
%                        grid style=dashed,
%                        legend style={font=\tiny},
%                    ]
%                    \addplot[
%                        color=cyan,
%                        mark=triangle*,
%                    ]
                    coordinates {
(0,13636) (1,2061) (2,596) (3,374) (4,163) (5,106) (6,59) (7,54) (8,47) (9,20) (10,19) (11,14) (12,10) (13,6) (14,7) (15,8) (16,0) (17,4) (18,3) (19,2)                     };
%                     \legend{MC-S}
                    \legend{MC}
                    \end{axis}
                \end{tikzpicture}
                \label{}
            }\hspace{0pt}%
            \subfigure{
                \begin{tikzpicture}
                    \begin{axis}[
                        xtick={0,5,10,15},
                        yticklabels={,,},
                        legend pos=north east,
                        ymajorgrids=true,
                        grid style=dashed,
                        legend style={font=\tiny},
                    ]
                    \addplot[
                        color=cyan,
                        mark=triangle*,
                    ]
%                    coordinates {
%(0,174) (1,152) (2,117) (3,153) (4,142) (5,127) (6,99) (7,92) (8,80) (9,52) (10,83) (11,58) (12,73) (13,31) (14,27) (15,37) (16,33) (17,48) (18,34) (19,46)                  };
%                    \legend{RM-J}
%                    \end{axis}
%                \end{tikzpicture}
%            }\hspace{0pt}%
%            \subfigure{
%                \begin{tikzpicture}
%                    \begin{axis}[
%                        xtick={0,5,10,15},
%                        yticklabels={,,},
%                        legend pos=south east,
%                        ymajorgrids=true,
%                        grid style=dashed,
%                        legend style={font=\tiny},
%                    ]
%                    \addplot[
%                        color=cyan,
%                        mark=triangle*,
%                    ]
                    coordinates {
(0,133) (1,119) (2,121) (3,139) (4,101) (5,124) (6,117) (7,121) (8,123) (9,127) (10,130) (11,105) (12,135) (13,125) (14,131) (15,132) (16,136) (17,128) (18,135) (19,123)                     };
%                     \legend{RM-S}
                    \legend{RM}
                    \end{axis}
                \end{tikzpicture}
            }\hspace{0pt}%
            \subfigure{
                \begin{tikzpicture}
                    \begin{axis}[
                        xtick={0,5,10,15},
                        yticklabels={,,},
                        legend pos=north east,
                        ymajorgrids=true,
                        scaled y ticks={base 10:0},
                        grid style=dashed,
                        legend style={font=\tiny},
                    ]
                    \addplot[
                        color=violet,
                        mark=*,
                    ]
                    coordinates {
(0,11181) (1,508) (2,48) (3,11) (4,5) (5,3) (6,3) (7,2) (8,0) (9,0) (10,0) (11,0) (12,1) (13,0) (14,0) (15,0) (16,0) (17,0) (18,0) (19,0)                };
                    \legend{Fusion}
                    \end{axis}
                \end{tikzpicture}
                \label{}
            }\hspace{0pt}%
            \subfigure{
                \begin{tikzpicture}[trim axis right]
                    \begin{axis}[
                        scaled y ticks={base 10:0},
                        xtick=data,
                        xticklabels={C,, D,, E,,F\#,,G\#,,A\#,},
                        yticklabels={,,},                        
                        legend pos=south west,
                         ymajorgrids=true,
                        grid style=dashed,
                    ]
                    
                    \addplot[
                        color=orange,
                        mark=*,
                    ]
                    coordinates {
(0,13948) (1,8166) (2,11687) (3,11631) (4,8240) (5,15458) (6,6584) (7,14181) (8,10243) (9,10258) (10,14071) (11,7061)                     };
                    
%                    \addplot[
%                        color=red,
%                        mark=*,
%                    ]
%                    coordinates {
%                        (1,88.1)(2,88.2)(3,87.6)(4,89.1)(5,87.2)(6,89.3)(7,88.3)(8,89.0)(9,87.2)(10,87.2)(11,87.7)(12,88.2) 
%                    };
%                    \legend{jazz}
                    \end{axis}
                \end{tikzpicture}
            }\hspace{0pt}%
            \subfigure{
                \begin{tikzpicture}
                    \begin{axis}[
                        xtick=data,
                        xticklabels={C,, D,, E,,F\#,,G\#,,A\#,},
                        yticklabels={,,}
                        legend pos=south west,
                        ymajorgrids=true,
                        grid style=dashed,
                    ]
                    
                    \addplot[
                        color=magenta,
                        mark=*,
                    ]
                    coordinates {
(0,5169) (1,427) (2,5303) (3,372) (4,4155) (5,2145) (6,1286) (7,5024) (8,492) (9,4677) (10,979) (11,3921)                     };
                    \end{axis}
                \end{tikzpicture}
                \label{}
            }\hspace{0pt}%
            \subfigure{
                \begin{tikzpicture}
                    \begin{axis}[                  
                        xtick=data,
                        xticklabels={C,, D,, E,,F\#,,G\#,,A\#,},
                        yticklabels={,,}
                        legend pos=south west,
                        ymajorgrids=true,
                        grid style=dashed,
                    ]
                    \addplot[
                        color=cyan,
                        mark= triangle*,
                    ]
                    coordinates {
(0,869) (1,629) (2,448) (3,1650) (4,368) (5,863) (6,985) (7,503) (8,1020) (9,369) (10,959) (11,699)                     };
                  
                    \end{axis}
                \end{tikzpicture}
                \label{}
            }\hspace{0pt}%
            \subfigure{
                \begin{tikzpicture}
                    \begin{axis}[
                        xtick=data,
                        xticklabels={C,, D,, E,,F\#,,G\#,,A\#,},
                        yticklabels={,,},
                        legend pos=south west,
                        ymajorgrids=true,
                        grid style=dashed,
                    ]
                     \addplot[
                        color=cyan,
                        mark= triangle*,
                    ]
                    coordinates {
(0,1170) (1,538) (2,814) (3,629) (4,578) (5,1010) (6,481) (7,1099) (8,572) (9,749) (10,868) (11,557)                     };

                    \end{axis}
                \end{tikzpicture}
                \label{}
            }\hspace{0pt}%
            \subfigure{
                \begin{tikzpicture}
                    \begin{axis}[
                        xtick=data,
                        xticklabels={C,, D,, E,,F\#,,G\#,,A\#,},
                        yticklabels={,,},
                        legend pos=south west,
                        ymajorgrids=true,
                        grid style=dashed,
                    ]
                    \addplot[
                        color=cyan,
                        mark= triangle*,
                    ]
                    coordinates {
(0,5270) (1,1) (2,134) (3,92) (4,629) (5,151) (6,501) (7,279) (8,3) (9,117) (10,22) (11,356)                     };
                    
                    \end{axis}
                \end{tikzpicture}
                \label{}
            }\hspace{0pt}%
            \subfigure{
                \begin{tikzpicture}
                    \begin{axis}[
                        xtick=data,
                        xticklabels={C,, D,, E,,F\#,,G\#,,A\#,},
                        yticklabels={,,},                   
                        legend pos=south west,
                        ymajorgrids=true,
                        grid style=dashed,
                    ]

                    \addplot[
                        color=cyan,
                        mark= triangle*,
                    ]
%                    coordinates {
%(0,1738) (1,759) (2,1590) (3,1197) (4,1204) (5,1617) (6,710) (7,1818) (8,949) (9,1360) (10,1369) (11,1077)                     };
%                    
%                    \end{axis}
%                \end{tikzpicture}
%                \label{}
%            }\hspace{0pt}%
%            \subfigure{
%                \begin{tikzpicture}
%                    \begin{axis}[
%                        xtick=data,
%                        xticklabels={C, C\#, D, D\#, E,F,F\#,G,F\#,A,A\#,B},
%                        yticklabels={,,},
%                        ymajorgrids=true,
%                        grid style=dashed,
%                    ]
%                    
%                    \addplot[
%                        color=cyan,
%                        mark= triangle*,
%                    ]
                    coordinates {
(0,2104) (1,926) (2,1915) (3,1287) (4,1364) (5,1811) (6,888) (7,2099) (8,1162) (9,1608) (10,1640) (11,1185)        };

                    \end{axis}
                \end{tikzpicture}
                \label{}
            }\hspace{0pt}%
            \subfigure{
                \begin{tikzpicture}
                    \begin{axis}[
                        xtick=data,
                        legend pos=south west,
                        xticklabels={C,, D,, E,,F\#,,G\#,,A\#,},
                        yticklabels={,,},
                        ymajorgrids=true,
                        grid style=dashed,
                    ]
                    
                   \addplot[
                        color=cyan,
                        mark= triangle*,
                    ]
%                    coordinates {
%(0,191) (1,152) (2,164) (3,213) (4,151) (5,225) (6,111) (7,194) (8,178) (9,177) (10,205) (11,152)                    };
%                    \end{axis}
%                \end{tikzpicture}
%                \label{}
%            }\hspace{0pt}%
%            \subfigure{
%                \begin{tikzpicture}
%                    \begin{axis}[
%                        xtick=data,
%                        xticklabels={C, C\#, D, D\#, E,F,F\#,G,F\#,A,A\#,B},
%                        yticklabels={,,},
%                        legend pos=south west,
%                        ymajorgrids=true,
%                        grid style=dashed,
%                    ]
%                    \addplot[
%                        color=cyan,
%                        mark= triangle*,
%                    ]
                    coordinates {
(0,258) (1,227) (2,198) (3,216) (4,221) (5,176) (6,244) (7,218) (8,169) (9,223) (10,209) (11,205)                    };
                    \end{axis}
                \end{tikzpicture}
                \label{}
            }\hspace{0pt}%
            \subfigure{
                \begin{tikzpicture}
                    \begin{axis}[
                        xtick=data,
                        xticklabels={C,, D,, E,,F\#,,G\#,,A\#,},
                        yticklabels={,,},
                        legend pos=south west,
                        ymajorgrids=true,
                        grid style=dashed,
                    ]
                    \addplot[
                        color=violet,
                        mark=*,
                    ]
                    coordinates {
(0,3849) (1,135) (2,3948) (3,219) (4,1314) (5,1200) (6,172) (7,2941) (8,115) (9,487) (10,147) (11,328)                     };
                    \end{axis}
                \end{tikzpicture}
            }\hspace{0pt}%
        }    
        \caption{upper line: \textbf{DD} (\textbf{x-axis}: duration length;\textbf{y-axis}: percentage); lower line: \textbf{NPD} (\textbf{x-axis}: pitch class; \textbf{y-axis}: percentage) 
        }
        \label{fig:duradis}
\end{figure*}
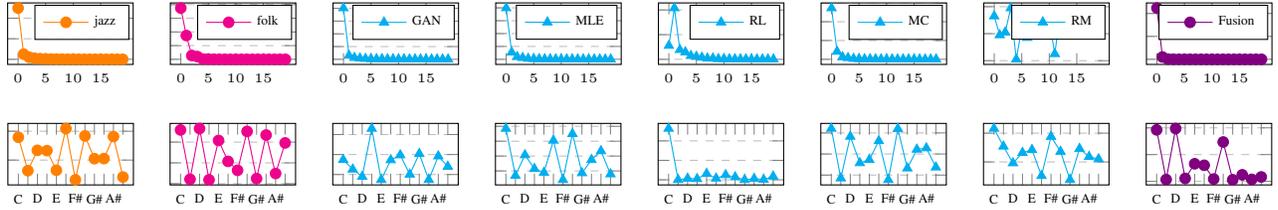

Training data includes two datasets called the Weimar jazz dataset\footnote{http://jazzomat.hfm-weimar.de} and the Essen Associative Code (EsAC) folk dataset\footnote{http://www.esac-data.org}. Both of these datasets contain symbolic data of single-voiced tracks, which support our objective of generating solo melodies. The baseline methods include:

\begin{enumerate}
    \item \textbf{Random Mixing (RM)}: \textbf{RM} randomly selects notes and duration from the pitch and duration distribution on the combination of $\XA$ and $\XB$. 
%     There are two selecting policies: \textbf{RM-J} samples from the joint distribution of note and duration distribution, and \textbf{RM-S} samples them separately.
    \item \textbf{Monte Carlo Sampling (MC)}: After calculating the note pitch and duration distributions on the combination of $\XA$ and $\XB$, \textbf{MC} samples notes and duration subject to these distributions. 
%     Similarly, we also have two types of sampling policies which are referred to as \textbf{MC-J} and \textbf{MC-S}.
    \item \textbf{Maximum Likelihood Estimation (MLE)}: Employing LSTM, \textbf{MLE} directly learns a pattern from the mixture of $\XA$ and $\XB$ by predicting the next element.
    \item \textbf{GAN}: The model directly learns a pattern from the mixture of $\XA$ and $\XB$ using GAN.
    \item \textbf{Reinforcement Learning (RL)}:  After applying GAN on $\DA-\GA, \DB-\GB$, we exchange the discriminators $\DA$ and $\DB$ which are treated as rewards function to $\GB$ and $\GA$ respectively.
    \end{enumerate}

\subsection{Quantitative Study}
Due to the aesthetic nature of the task, a quantitative evaluation can only give a first impression of the power of the generative system. Here, we investigate the distance between pitch and note duration distributions, knowingly discarding all sequential information of the melodies.

The ideal output should keep equally close to the given domains and minimize this distance. Two distributions of properties are selected for the study.
\begin{enumerate}
%    \item \textbf{Pitch Distribution(PD)}: Each token called a note in music belongs to one of 88 pitch classes with an octave level from A0 to C8.
    \item \textbf{Duration Distribution (DD)}: Notes are categorized by their durations.
    \item \textbf{Normalized Pitch Distribution (NPD)}: All tokens are categorized into C/C\#/D/D\#/E/F/F\#/G/G\#/A/A\#/B. 
%    \item \textbf{Pitch Level Distribution(PLD)}: All tokens are categorized into the 8 octave classes without a pitch class.
%    \item \textbf{Joint Distribution of Note and Duration(JDND)}: \textbf{NPD} and \textbf{DD} are integrated into a joint distribution.
\end{enumerate}

To quantify the similarity between distributions, two symmetric metrics, the Euclidean distance (\textbf{EUD}) and the Wasserstein-1 metric (Earth Mover distance (\textbf{EM})), are employed to calculate the distance between the distributions of generated sequences and those of domains $\XA$ and $\XB$. Intuitively, the distance between $\XA$ and $\XB$, between $\XA, \D_{F}$,  and between $\XB, \D_{F}$ should satisfy the triangle inequality, and the optimal configuration should minimize the inequality, i.e.,
{\footnotesize
\begin{equation*}
\label{sdpfim}
\begin{aligned}
& \underset{}{\text{min}}
& & Diff = \|\XA- \GF(z)\|+\|\XB-\GF(z)\| - \|\XA - \XB\|, \\
& \text{}
&& Ratio =\frac{\|\XA- \GF(z)\| + \|\XB-\GF(z)\|}{\|\XA - \XB\| }.  \\
% & \text{s.t.}
% & & Diff \geq 0, Ratio \geq 1
\end{aligned} 
\end{equation*}
}

\begin{table}[!hpbt]
\centering
\caption{Distance between Distributions (the lower the better)}
\label{tab:distcomp}
\scalebox{1}{
\begin{tabularx}{\linewidth}{c|XX|XX}%{c|X X X}
\toprule
   & \multicolumn{2}{l|}{\textbf{EUD}} & \multicolumn{2}{l}{\textbf{EM}} \\ \cline{2-5} 
         & Diff          & Ratio           & Diff          & Ratio          \\ \hline
\multicolumn{5}{c}{\textbf{DD}}                            \\ \hline
\textbf{RM} & 39742.2 & 1.375 & 2757.6 & 1.461 \\
\textbf{MLE} & 24546.4 & 1.231 & 2005.2 & 1.335 \\
\textbf{GAN} & 24765.2 & 1.233 & 2064.3 & 1.345 \\
\textbf{RL} & 37971.2 & 1.358 & 2629.0 & 1.439 \\
\textbf{MC} & 13988.7 & 1.132 & 1289.2 & 1.215 \\
\hline
\textbf{Fusion} & 19452.6 & 1.183 & 1831.9 & 1.306 \\
\hline
\multicolumn{5}{c}{\textbf{NPD}}                            \\ \hline
\textbf{RM} &  19586.6 & 1.647 & 5231.0 & 1.643 \\
\textbf{MLE} & 15921.4 & 1.526 & 4147.5 & 1.510 \\
\textbf{GAN} & 16807.1 & 1.555 & 4098.0 & 1.504 \\
\textbf{RL}  & 16927.1 & 1.559 & 4399.2 & 1.541 \\
\textbf{MC} & 11175.0 & 1.369 & 2660.2 & 1.327 \\
\hline
\textbf{Fusion} & 11564.6 & 1.382 & 3182.5 & 1.391 \\
\hline
\bottomrule
\end{tabularx}
}
\end{table}

Figure~\ref{fig:duradis} (top) shows the note duration distributions (DD) and Table \ref{tab:distcomp} (top) gives the corresponding distance measures. The note duration distribution of jazz and folk have shapes that resemble the power law distribution. The shape of \textbf{RM} is very different from jazz or folk. Visually, \textbf{RL}'s duration distribution does not match that of jazz and folk because the portion of notes at the length of 0 and 1 (the peak) were relatively higher than that of jazz or folk. The other baselines (\textbf{GAN}, \textbf{MLE}, \textbf{MC}) appear similar to the given domains because they conform to the power law distributions. \textbf{Fusion} is roughly at the same level of \textbf{MC}. Note that, given our evaluation metrics, \textbf{MC} has the best performance in this experiment since it directly samples notes from the true distribution. This disregard of sequential information, however, results in unnatural rhythmic qualities of the music clips generated by \textbf{MC}. 

Figure~\ref{fig:duradis} (bottom) shows the pitch distributions (NPD) and Table \ref{tab:distcomp} (bottom) gives the corresponding distance measures. %Another music metric is normalized pitch, which is strongly associated with the melody. 
\textbf{RL}'s pitch distribution is ---due to overfitting--- different from that of jazz and folk, which shows its low capacity in modeling pitch distribution. The single peak at D\# as generated by \textbf{GAN} does not match either the jazz or folk distributions. The distributions of the other baselines are difficult to categorize as a good or bad match. Since \textbf{MC} is directly sampled from real distributions, its configurations should be exactly the same as that of the optimal fusion. \textbf{MC} shows two valleys at C\# and F\#, and two peaks at C and G, which should be expected. \textbf{RM} satisfies one of them (F\# is a valley), while \textbf{RM} meets only one standard (C is the peak). \textbf{GAN} fails in all the characteristic pitch classes, and both \textbf{MLE} and \textbf{Fusion} meet all of them. Quantitatively, the ratio of \textbf{RM} in \textbf{NPD} is the highest which means that \textbf{RM} is the worst in modeling pitch. \textbf{GAN} and \textbf{RL} improve \textbf{RM} by around 10\%. As one of the two distributions that look most similar to the original distributions, \textbf{MLE} outperforms \textbf{RM}, \textbf{RL}, and \textbf{GAN}. Our method \textbf{Fusion} has the second lowest ratio after \textbf{MC}. However, as mentioned above, the \textbf{MC} does not consider the sequential consistence, resulting in bad evaluations in the listening test described in Sect.~\ref{case}.

% \vspace{-0.5em}
\subsection{User Study via Listening Test}\label{case}
To evaluate the subjective quality of the generated sequences, a user study was conducted via Amazon Mechanical Turk. The listening test was divided into three steps: 
%{\color{yellow}since the questions not good english, maybe remove them and describe the step}

\begin{enumerate}
    \item \textbf{Qualification test}: \textit{``Listen to the music, and choose one genre below that most matches the music''}. This is to test the evaluators' qualification level. The provided choices are \textit{(A) jazz, (B) folk, (C) neither}.
    \item \textbf{Fusion recognition}: \textit{``Listen to the attached music, and then choose the choice that most matches the music''} Possible answers are \textit{(A) pure jazz, (B) pure folk, (C) mixture of jazz and folk, (D) neither}. Each of the pieces of music generated by our proposed method and the baselines are assigned to one such question.
    \item \textbf{Musicality}: \textit{``Listen to the attached music in 2nd step, who do you think its composer''}, the allowed answers are \textit{(A) expert, (B) newbie, (C) robot}. Similarly, each generated sample is associated with one such question.
\end{enumerate}
We randomly drew from the training data and prepared 800 sets for the first question. In the first step, 66.5\% of candidates chose the correct answer, which is an acceptable rate given the inherent difficulties of genre identification for average listeners. The statistics resulting from the valid answers are shown in Table \ref{amt}. 
Each of the methods generated 500 samples, which were randomly selected for the second and third questions. 

As shown in Table \ref{amt}, the best system based on the percentage of \textit{mixture} is \textbf{MLE}. A closer look at \textbf{MLE}, however, reveals the unbalanced distribution between \textit{jazz} and \textit{folk}, which implies its bias towards the jazz genre. On the other hand, \textbf{RM} is the best system based on the balance between \textit{jazz} and \textit{folk}, but the high percentage of \textit{neither} suggests the confusion of the listeners. It is clear that a single criterion is insufficient for determining the best system.
%\textbf{RM} achieved the lowest value in \textit{mixture}, and its scores of \textit{jazz} and \textit{folk} is balanced. However, \textit{neither} of \textit{RM} is the largest. \textbf{MLE} has unbalanced ratings on \textit{jazz} and \textit{folk}, but obtained low \textit{mixture} and high \textit{neither}. These numbers are also inconsistent, so it is clear that individual numbers are confusing and insufficient for evaluation. 
Therefore, we design a metric to summarize the results and represent the fusion level ($\mathbf{FL}$) of the evaluated systems with the following equation:
{\small
\begin{equation*}
\label{fusionlvl}
\begin{aligned}
& \underset{}{\text{maximize}}
& & 
\mathbf{FL}=1-\frac{\|C_{jazz}-C_{folk}\|+C_{neither}}{C_{jazz}+C_{folk}+C_{mixture}+C_{neither}},\\
%&\text{where}
%& & P_{i}=\frac{Count_{i}}{Count_{total}}, i \in \{folk, jazz\}
\end{aligned} 
\end{equation*}
}where $C_{i}$ indicates the count of $i$. $\|C_{jazz}-C_{folk}\|$ means the unbalanced error, and $C_{neither}$ can be treated as another type of error. A higher $\mathbf{FL}$ value implies better fusion.\\
In the $Musicality$ test, \textbf{MLE} (45.5\%), \textbf{GAN} (42.0\%), and \textbf{Fusion} (43.8\%) were voted by the majority as \textit{expert}. Overall, \textbf{Fusion} shows a balanced performance in both $Fusion recognition$ and $Musicality$ tests, which demonstrates the effectiveness of our proposed method.

\begin{table}
\caption{Listening test results 
%{\color{yellow}can I see the neither votes? If those are the remaining votes, then they are significant and should be ignored when computing P}
}
\label{amt}
\begin{center}
\vspace{-0.5em}
 \begin{tabularx}
 {\linewidth}{c|>{\hsize=.7\hsize}X >{\hsize=.7\hsize}X >{\hsize=.7\hsize}XX|X}
 \toprule
 
\multicolumn{6}{c}{\textbf{Fusion recognition}}                            \\ \hline
    & \textit{jazz} & \textit{folk} & \textit{mixture} & \textit{neither} &\textbf{FL}\\ \hline
\textbf{RM}   & 25.0\% & 22.5\% & 12.5\%  & 40\%& 57.5\%\\ \hline
\textbf{MLE}   & 43.6\% & 9.1\% & 30.9\%  & 16.4\% & 49.1\%\\ \hline
\textbf{GAN}   & 34.0\% & 17.0\% & 26.0\% &14\%& 69\%\\ \hline
\textbf{RL}   & 20.1\% & 28.3\% & 20.8\%  & 30.8\%& 61\% \\ \hline
\textbf{MC}   & 32.0\% & 2.0\% & 14.0\% & 52\%& 16\% \\ \hline
\textbf{Fusion}   & 35.9\% & 25.0\% & 20.0\% & 19.1\% & 70\% \\ \hline
\end{tabularx}
\smallskip

\begin{tabularx}{\linewidth}{c|X X X}
\multicolumn{4}{c}{\textbf{Musicality}}                            \\ \hline
     & \textit{expert} & \textit{newbie} & \textit{robot} \\ \hline

\textbf{RM}   & 22.5\% & 50.0\% & 27.5\%  \\ \hline
\textbf{MLE}   & 45.5\% & 21.8\% & 32.7\%  \\ \hline
\textbf{GAN}   & 42.0\% & 32.0\% & 26.0\%  \\ \hline
\textbf{RL}   & 32.1\% & 37.7\% & 30.2\%   \\ \hline
\textbf{MC}   & 30.0\% & 36.0\% & 34.0\% \\ \hline
\textbf{Fusion}   & 43.8\% & 28.1\% & 28.1\% \\
% \hline
\bottomrule
\end{tabularx}
\vspace{-0.5em}
\end{center}
\end{table}
\vspace{-0.5em}

\section{Conclusion}\label{conclusion}
In this paper, we proposed a three-way GAN-based learning framework to integrate multiple domains. A Wasserstein distance based metric is introduced to indicate the blending progress. The evaluation investigated objective metrics looking at the pitch and note duration distributions of the generated data. A user study explicitly illustrates the validity of the proposed method as compared to the baselines, as the melodies generated by our model were preferred by the majority of users. While the listening test results are encouraging, future work will benefit from removing key dependence of the training data, a careful look into perceptually meaningful evaluation metrics that take into account the sequential nature of music, and a careful listening test design, including human-composed melodies for comparison.

\bibliographystyle{IEEEtranS}
\bibliography{IEEEexample}

% that's all folks
\end{document}